\documentclass{article}
\usepackage{spconf,amsmath,amssymb,amsthm,mathtools,graphicx,hyperref}
\usepackage{xcolor}
\usepackage{acronym}
\usepackage{cleveref}
\usepackage[caption=false,font=footnotesize]{subfig}
\usepackage{cite}
\usepackage{balance}
    
\newcommand{\qtil}{\tilde q}
\newcommand{\Ltil}{\tilde L_\theta}

\newcommand{\E}{\mathbb E}
\newcommand{\R}{\mathbb R}

\acrodef{mdp}[MDP]{Markov decision process}
\acrodef{rl}[RL]{reinforcement learning}
\acrodef{td}[TD]{temporal-difference}

\theoremstyle{plain}
\newtheorem{theorem}{Theorem}

\newtheorem{assumption}{Assumption}      
\theoremstyle{remark}

\crefname{assumption}{assumption}{assumptions}
\Crefname{assumption}{Assumption}{Assumptions}

\title{A Contraction Framework for Stochastic Operators with Bootstrapping: Application to TD Learning}
\name{Ids van der Werf~$^{\dagger}$, Sergio Rozada~$^{\ddagger}$, and Antonio G. Marques~$^{\ddagger}$ 
}
\address{$^{\dagger}$Delft University of Technology, Dep. Microelectronics, Delft, The Netherlands \\  $^{\ddagger}$Rey Juan Carlos University, Dep. Signal Theory and Comms., Madrid, Spain}
\begin{document}
\ninept
\maketitle

\begin{abstract}
Many iterative algorithms rely on bootstrapping. A variable is updated using a second, frozen copy as a target, which is periodically replaced with the updated variable.
Majorize--minimize and inexact proximal-point methods share this structure, as does \ac{td} learning. However, existing convergence guarantees for scenarios that combine sampled updates with targets refreshed only every $K$ steps rely on the specific structure of the update, such as linear approximation or gradient-based inner steps, and on uniformly bounded sampling error. We instead model the sampled update as a stochastic operator on the parameter space, which reduces the analysis to a contraction argument that needs no gradient structure and allows the sampling error to grow with the iterates. Within this framework, we derive a finite-time bound for i.i.d.\ samples and any target-update period $K$. We show that the iterates converge geometrically in root mean square to a ball around the fixed point, provided the sensitivity to the frozen target is smaller than the contraction slack of the inner map. Existing deterministic frozen-target contraction and stochastic-gradient-type bounds follow as special cases of our framework, and simulations of \ac{td} learning reproduce the predicted contraction rate and scaling of the error floor with the step size.
\end{abstract}

\begin{keywords}
Reinforcement learning, target networks, semi-gradient methods, stochastic contraction
\end{keywords}
\section{Introduction}\label{sec:intro}

\Ac{rl} has become a key framework for decision-making problems in communications, signal processing, and control~\cite{arulkumaran2017deep, akiyama2024nonparametric, rozada2024tensor,thornton2020deep,dahrouj2021overview, bozkus2023link}. At its core lies the estimation of value functions, which quantify cumulative rewards and satisfy Bellman equations~\cite{bertsekas2019reinforcement}. 
In this context, \acf{td} learning~\cite{bertsekas2019reinforcement} has emerged as the workhorse approach for value-function estimation.
\ac{td} learning estimates value functions from sampled transitions by taking \emph{semi-gradient} steps that regress the current value estimate onto a \emph{bootstrapped target}, i.e., a target that itself depends on the value estimate. Because of this dependence, the update is not the gradient of any fixed objective and establishing convergence is challenging. Practical implementations mitigate the issue by holding the target fixed for a number of steps, the so-called \emph{target network}~\cite{mnih2015human}. In this work, we study the convergence of \ac{td} learning from a \textit{stochastic operator-theoretic perspective}. 

The convergence theory of \ac{td} learning originates from tabular methods such as Q-learning~\cite{watkins1992q}, which perform stochastic approximation \textit{directly} in the entries of value functions and whose convergence is well established~\cite{tsitsiklis1994asynchronous}. A large \ac{mdp}, however, requires \textit{parametric} value-function approximations~\cite{bertsekas1996neuro}, making the analysis substantially more involved. Classical results establish convergence for linear approximators when bootstrapped targets are computed using the current value estimate~\cite{tsitsiklis1997analysis, tadic2001convergence, bhandari2018finite}. In practice, however, nonlinear approximators are commonly used together with target networks. Convergence guarantees that simultaneously account for sampled transitions, targets refreshed only every $K>1$ steps, and approximators beyond the linear class remain scarce. Recently, an optimization-based approach has been  proposed to analyze \ac{td} learning without assuming linearity and for arbitrary target-update periods~\cite{asadi2023td}. However, this framework focuses on a deterministic variant of \ac{td} learning. Subsequent works have incorporated stochasticity or target networks, but each relies on a specific structure of the \ac{td} update (linear approximation, gradient-based inner steps) and on its own analytical device (Jacobian analysis~\cite{fellows2023target}, preconditioning~\cite{wu2025unifying}, switched linear systems~\cite{lee2026periodic}), so that their guarantees do not transfer to one another, let alone to inner updates that are not gradient steps. We relate these works to our results in \Cref{sec::related}, once the latter have been presented.

In this work, we resort to an operator framework, which is sufficiently flexible and powerful to model algorithms such as gradient descent~\cite{larsson2025unified}. We \textit{formulate \ac{td} learning from a general operator perspective} and characterize the conditions under which it converges with i.i.d.\ samples. Specifically, our contributions are: 

\vspace{.1cm}
\noindent
\hspace*{0.2cm}
\begin{minipage}{\dimexpr\linewidth-0.2cm\relax}
\noindent \textbf{C1)} Formulating the \ac{td} learning algorithm with parametric-value function approximation from an operator perspective.\\
\noindent \textbf{C2)} Establishing convergence in the stochastic setting with i.i.d.\ samples\footnotemark, without assuming linearity and for arbitrary target-update periods $K$. 
\end{minipage}
\footnotetext{In practice, one would ultimately like to consider Markovian noise, as naturally arises in MDPs. Extending the present results to this setting, together with developing sharper and more general proof techniques, exceeds the scope of a conference paper and is left for the journal version of this work.}
\vspace{.1cm}

In what follows, we first introduce the mathematical framework, establish the convergence result, and discuss its relation to existing work in technical detail. Afterward, we apply the framework to a simple \ac{td}$(0)$ learning problem. 

\section{Convergence results via operator theory}\label{sec:Operator_MainResult}

Many iterative algorithms maintain two coupled copies of the same variable and update them on different time scales. One copy is held fixed while the other is updated for a number of steps, after which the fixed copy is overwritten with the result and the process is repeated, as in majorize--minimize schemes~\cite{sun2017majorization}, inexact proximal-point methods~\cite{rockafellar1976monotone}, federated local updates~\cite{li2020federated} and \ac{td} learning, the case we analyze in detail in this work. In \ac{td} learning, the value-function parameter that defines the bootstrapped regression target is frozen while a second copy is fitted to that target for a few steps, a device known to stabilize learning in practice~\cite{mnih2015human}. Analyzing such schemes is challenging for two reasons.

\vspace{.1cm}
\noindent
\hspace*{0.2cm}
\begin{minipage}{\dimexpr\linewidth-0.2cm\relax}
\noindent\textit{a)} The inner updates rarely come from an explicit optimization problem: \ac{td} uses a semi-gradient rather than a gradient, and practical implementations incorporate ad-hoc modifications justified by their empirical performance. \\
\noindent\textit{b)} Every update is driven by samples, so the map being iterated is itself random. \end{minipage}
\vspace{.1cm}

\noindent We therefore work directly at the level of the \textit{update map, viewed as a stochastic operator on the parameter space}. Gradient, proximal-gradient or projected schemes are particular instances, but the operator need not be the gradient of anything. What matters is that it contracts the variable it updates, that it is stable with respect to the frozen one, and that its sampling error can be controlled in mean square. This section introduces the framework, particularizes it for \ac{td} target networks, states the convergence result, and gives its proof. 
 
\smallskip\noindent\textbf{Notation and scheme.}
Throughout, $\|\cdot\|$ is the Euclidean norm on $\R^d$ and, for matrices, the induced spectral norm. Let $\xi$ denote a random sample (in \ac{td}, a transition) with probability law $\mu$, and let $\hat T_\theta(w;\xi)$ be the \emph{sampled update map}. Given the frozen parameter $\theta\in\R^d$ and the sample $\xi$, it returns the next value of the working (input) variable $w\in\R^d$. The subscript identifies the copy that is held fixed during the inner steps, and the argument the copy being updated. At outer iteration $t$ the scheme freezes $\theta_t$, initializes $w_{t,0}=\theta_t$, performs $K$ inner steps
\begin{align}
    w_{t,k+1}=\hat T_{\theta_t}(w_{t,k};\xi_{t,k}),\qquad k=0,\dots,K-1,
    \label{eq:inner-hat}
\end{align}
and refreshes the target as $\theta_{t+1}=w_{t,K}$. We take \eqref{eq:inner-hat} as the primitive object and \emph{define} the population operator as its mean, $T_\theta(w)\!:=\!\E_\xi[\hat T_\theta(w;\xi)]$, and the \textit{sampling error} as the deviation from it, $\varepsilon(\theta,w,\xi)\!:=\!\hat T_\theta(w;\xi)-T_\theta(w)$, so that one inner step reads
\begin{align}
    w_{t,k+1} = T_{\theta_t}(w_{t,k}) + \varepsilon(\theta_t,w_{t,k},\xi_{t,k}).
    \label{eq:inner-rec}
\end{align}
Composing the $K$ inner steps defines the \emph{stochastic outer map} $\hat G(\theta_t):=w_{t,K}$, so that $\theta_{t+1}=\hat G(\theta_t)$, whose noiseless counterpart is the $K$-fold composition $G(\theta)=T_\theta^K(\theta)$. Note that the same $\theta_t$ enters every inner step, both as the parameter of the operator and as its initialization, hence $\hat G$ is not a composition of $K$ unrelated contractions~\cite{larsson2025unified}, and convergence is governed by $G$ rather than by $T_\theta$ alone. Finally, we write
\begin{align}
    H_{t,k}
    = \left(\theta_0,\ \xi_{0,0},\dots,\xi_{0,K-1},\ \dots,\ \xi_{t,0},
    \dots,\xi_{t,k-1}\right)
    \label{eq:history}
\end{align}
for the history preceding the draw of $\xi_{t,k}$, and $\E[\,\cdot \mid H_{t,k}]$ for the conditional expectation given that history. By construction, $\theta_t$ and $w_{t,k}$ are deterministic functions of $H_{t,k}$, whereas $\xi_{t,k}$ is drawn only after $H_{t,k}$ is realized.
 
\smallskip\noindent\textbf{Assumptions.}
Our analysis rests on one assumption on the population operator and one on the samples.
\begin{assumption}[Population operator]
\label{as:det}
There exist $q\in[0,1)$ and $L_\theta\ge 0$ such that, for all $w,w',\theta,\theta'\in\R^d$,
\begin{align}
    \|T_\theta(w) - T_\theta(w')\| & \leq q\,\|w-w'\|, \label{eq:A1}\\
    \|T_\theta(w) - T_{\theta'}(w)\| & \leq L_\theta\,\|\theta-\theta'\|. \label{eq:A2}
\end{align}
\end{assumption}
Condition \eqref{eq:A1} states that, with the target frozen, each inner step is a contraction, and \eqref{eq:A2} that moving the target moves the operator by a bounded amount. Applying \eqref{eq:A1} $K$ times and \eqref{eq:A2} once per inner step shows that $G(\theta)=T_\theta^K(\theta)$ is Lipschitz with constant $L_G(K)=q^K+L_\theta(1-q^K)/(1-q)$.
Since $1-L_G(K)=(1-q^K)(1-q-L_\theta)/(1-q)$, $G$ is a contraction for one (equivalently, every) $K\ge1$ if and only if $L_\theta<1-q$. In words, the sensitivity to the target must be smaller than the contraction slack of the inner solver. In that case $G$ has a unique fixed point $\theta^\star$. Moreover, $T_{\theta^\star}$ and $T_{\theta^\star}^K$ are contractions with the same unique fixed point, and $\theta^\star=T_{\theta^\star}^K(\theta^\star)$, so $T_{\theta^\star}(\theta^\star)=\theta^\star$; in particular, $\theta^\star$ does not depend on $K$.
\begin{assumption}[I.i.d.\ sampling with affine noise envelope]
\label{as:noise}
Each $\xi_{t,k}$ is drawn from $\mu$ independently of $H_{t,k}$, and there exist constants $\kappa_0$, $\kappa_w$, $\kappa_\theta\ge0$ such that, for all $\theta,w\in\R^d$,
\begin{align}
    \big(\E_\xi\|\varepsilon(\theta,w,\xi)\|^2\big)^{\!1/2}
    \!\leq\! \kappa_0 \!+\! \kappa_w\|w\| \!+\! \kappa_\theta\|\theta\|. 
    \label{eq:envelope-det}
\end{align}
\end{assumption}
Since $\theta_t$ and $w_{t,k}$ are functions of $H_{t,k}$ and $\xi_{t,k}$ is independent of it, Ass. \ref{as:noise} and the definition of $\varepsilon$ yield
\begin{align}
    \E\left[\varepsilon(\theta_t,w_{t,k},\xi_{t,k}) \mid H_{t,k}\right]
    &= 0, \label{eq:mds} \\
    \big(\E [\|\varepsilon(\theta_t,w_{t,k},\xi_{t,k})\|^2
    \!\mid \! H_{t,k} ] \big)^{\!1/2}
    \! & \leq \!\kappa_0 \!+\! \kappa_w\|w_{t,k}\| \! +  \!\kappa_\theta\|\theta_t\|,
    \label{eq:envelope}
\end{align}
almost surely, for all $t$ and $k$. Condition \eqref{eq:mds} states that the sampling errors form a martingale difference sequence with respect to $H_{t,k}$, which holds whenever the $\xi_{t,k}$ are drawn independently. Under Markovian sampling the conditional mean of $\varepsilon$ is a state-dependent bias, whose treatment we defer to the extended version. Regarding \eqref{eq:envelope}, an affine envelope, as opposed to a uniform variance bound, is essential (see \cite{larsson2025unified} for a related discussion). For \ac{td} the sampling error is affine in $(\theta,w)$ (\Cref{subsec:td}), so a uniform bound would presuppose bounded iterates, which is precisely what we want to prove. We keep $\kappa_w$ and $\kappa_\theta$ separate because they have different origins: $\kappa_w$ is the regression-type amplification present in any sampled fitting scheme, whereas $\kappa_\theta$ arises only through the bootstrapped target and, in \ac{td}, carries a factor $\gamma$. Finally, iterating \eqref{eq:envelope} shows that $\E\|w_{t,k}\|^2<\infty$ for all $(t,k)$ whenever $\E\|\theta_0\|^2<\infty$, which we assume throughout.
 
\subsection{Particularization to TD learning with a frozen target}
\label{subsec:td}
Let $\xi=(s,r,s')$ be a transition sampled from the stationary distribution of the policy, where $s$ is the current state, $r$ the current reward, and $s'$ the next state. Moreover, let $v(s;w)$ be the value-function approximator,  $\gamma\in(0,1)$ the discount factor, and $\delta(\theta,w,\xi):=r+\gamma v(s';\theta)-v(s;w)$ the so-called \ac{td} error \cite{sutton1988learning}. Then, if $\alpha>0$ is a step size, \ac{td} learning with a frozen target is \eqref{eq:inner-hat} with 
$$\hat T_\theta(w;\xi)=w+\alpha\,\delta(\theta,w,\xi)\,\nabla_w v(s;w),$$ 
i.e., a semi-gradient step on the squared \ac{td} error in which the target $\theta$ is not differentiated. By definition, $T_\theta(w)=w+\alpha\,\E_\xi[\delta(\theta,w,\xi)\nabla_w v(s;w)]$ and $\varepsilon$ is the centered version of the sampled step.
When $K=1$, this is standard \ac{td}$(0)$, since $w_{t,0}=\theta_t$. For $K>1$ the frozen $\theta_t$ plays the role of the target network of deep \ac{rl}~\cite{mnih2015human}, whose parameters are copied from the online network every $K$ updates.

For linear value-function approximation, i.e., $v(s;\theta)=\phi(s)^\top \theta$, the operator is affine, $T_\theta(w)=(I-\alpha A)w+\alpha(b+\gamma C\theta)$ with $A=\E[\phi(s)\phi^\top(s)]$, $C=\E[\phi(s)\phi(s')^\top]$, $b=\E[r\phi(s)]$. Hence Ass. \ref{as:det} and \ref{as:noise} hold with $q=\|I-\alpha A\|$, $L_\theta=\alpha\gamma\|C\|$, $\theta^\star=(A-\gamma C)^{-1}b$, where $q<1$ whenever $A\succ 0$ and $\alpha<2/\lambda_{\max}(A)$.
\begin{align}
    \kappa_0 & = \alpha\bigl(\E\|r\phi(s)-b\|^2\bigr)^{1/2}, \;
    \kappa_w = \alpha\bigl(\E\|\phi(s)\phi^\top(s)-A\|^2\bigr)^{1/2}, \notag \\
    \kappa_\theta & = \alpha\gamma\bigl(\E\|\phi(s)\phi^\top(s')-C\|^2\bigr)^{1/2}.
\end{align}
Each constant measures the spread of a sample quantity around the population quantity it estimates. For nonlinear $v$, Ass. \ref{as:det} is an assumption rather than a consequence. We do not claim it holds for arbitrary approximators.

\subsection{Main result}
We are now ready to present our main result, the contraction of the stochastic iterates $\theta_{t+1} = \hat{G}(\theta_t)$.

\begin{theorem}[Bound on stochastic iterates]
\label{thm:envfree_bound_iid}
Suppose Ass. \ref{as:det} and \ref{as:noise} hold, and define $\qtil :=\sqrt{q^2+\kappa_w^2}$ and $\Ltil := \sqrt{L_\theta^2+\kappa_\theta^2}$. If $\qtil+\Ltil<1$, the iterates $\theta_t$ satisfy
\begin{align}
    \left(\E\|\theta_t-\theta^\star\|^2\right)^{1/2}
    \leq \hat d + \rho^{\,t}\Bigl(
    \left(\E\|\theta_0-\theta^\star\|^2\right)^{1/2}-\hat d\Bigr)^{\!+}\!,
    \label{eq:envfree_bound_iid}
\end{align}
where $(x)^+\!:=\max(x,0)$ and the contraction factor and the error floor are, respectively,
\begin{align}
    \rho =\qtil^K + \Ltil\,\frac{1-\qtil^K}{1-\qtil},
    \;\;
    \hat d = \frac{\kappa_0'}{\sqrt{(1-\Ltil)^2-q^2}-\kappa_w},
    \label{eq:rho_dhat}
\end{align}
with $\kappa_0' := \kappa_0 + (\kappa_w+\kappa_\theta)\|\theta^\star\|$.
\end{theorem}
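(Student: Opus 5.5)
The plan is to track root-mean-square distances to $\theta^\star$ along the inner loop. We show that each inner step contracts the \emph{excess} over the level $\hat d$ by $\qtil$, with an additive leakage $\Ltil$ coming from the frozen target, and then compose the $K$ inner steps.

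Write $\|X\|_{L^2}:=(\E\|X\|^2)^{1/2}$ and set $D_t:=\|\theta_t-\theta^\star\|_{L^2}$ and $e_k:=\|w_{t,k}-\theta^\star\|_{L^2}$, so that $e_0=D_t$ and $e_K=D_{t+1}$.

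\textbf{Step 1: one-step recursion.} Since $T_{\theta^\star}(\theta^\star)=\theta^\star$, we can write
\[
w_{t,k+1}-\theta^\star=\bigl(T_{\theta_t}(w_{t,k})-T_{\theta^\star}(\theta^\star)\bigr)+\varepsilon_{t,k}.
\]
The first term is a function of $H_{t,k}$. By \eqref{eq:mds}, the cross term vanishes after conditioning on $H_{t,k}$, which gives the Pythagorean identity
\[
\E\|w_{t,k+1}-\theta^\star\|^2=\E\|T_{\theta_t}(w_{t,k})-\theta^\star\|^2+\E\|\varepsilon_{t,k}\|^2 .
\]
For the first term, insert $T_{\theta^\star}(w_{t,k})$ and use \eqref{eq:A1}--\eqref{eq:A2} pointwise:
\[
\|T_{\theta_t}(w_{t,k})-\theta^\star\|\le q\|w_{t,k}-\theta^\star\|+L_\theta\|\theta_t-\theta^\star\| .
\]
For the noise, use $\|w\|\le\|w-\theta^\star\|+\|\theta^\star\|$ (and likewise for $\theta$) in \eqref{eq:envelope}, then take expectations:
\[
\|\varepsilon_{t,k}\|_{L^2}\le \kappa_0'+\kappa_w e_k+\kappa_\theta D_t .
\]
This is where $\kappa_0'$ appears. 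Applying Minkowski in $L^2$ to the first term, we get
\[
e_{k+1}^2\le (q e_k+L_\theta D_t)^2+(\kappa_0'+\kappa_w e_k+\kappa_\theta D_t)^2 .
\]
The right-hand side is the squared Euclidean norm of a sum of two $2$-vectors, namely $(q e_k,\ \kappa_w e_k+\kappa_0')$ and $D_t(L_\theta,\kappa_\theta)$. The triangle inequality in $\R^2$ then gives
\[
e_{k+1}\le \varphi(e_k)+\Ltil D_t,\qquad \varphi(e):=\sqrt{q^2e^2+(\kappa_w e+\kappa_0')^2}.
\]

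\textbf{Step 2: identify $\hat d$ as the invariant level.} The condition $\varphi(d)+\Ltil d=d$ is equivalent to $q^2d^2+(\kappa_w d+\kappa_0')^2=(1-\Ltil)^2d^2$, i.e.\ to $\kappa_w d+\kappa_0'=\sqrt{(1-\Ltil)^2-q^2}\,d$. Its solution is exactly $\hat d$ in \eqref{eq:rho_dhat}. The denominator is positive because $\qtil+\Ltil<1$ gives $(1-\Ltil)^2>q^2+\kappa_w^2$, hence $\sqrt{(1-\Ltil)^2-q^2}>\kappa_w$. Next, write $e=\hat d+x$. The vector $(qe,\kappa_w e+\kappa_0')$ equals $(q\hat d,\kappa_w\hat d+\kappa_0')+x(q,\kappa_w)$. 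Since $\varphi$ is nondecreasing on $e\ge0$, Minkowski gives
\[
\varphi(\hat d+x)\le\varphi(\hat d)+\qtil\,x^+ .
\]
Combining with Step 1, the excesses $x_k:=e_k-\hat d$ and $y:=D_t-\hat d$ satisfy
\[
x_{k+1}\le \qtil\,x_k^+ +\Ltil\,y^+ .
\]

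\textbf{Step 3: compose and iterate.} Starting from $x_0=y$ and inducting on $k$, we get $x_k^+\le\bigl(\qtil^k+\Ltil(1-\qtil^k)/(1-\qtil)\bigr)y^+$, so $D_{t+1}-\hat d\le\rho\,(D_t-\hat d)^+$. Consequently, if $D_t\le\hat d$ then $D_{t+1}\le\hat d$, and otherwise the excess shrinks by the factor $\rho$. Inducting on $t$ yields \eqref{eq:envfree_bound_iid}; finiteness of all moments was assumed earlier.

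The main obstacle is Step 1. The orthogonality must be justified carefully via the history $H_{t,k}$. The pointwise Lipschitz bounds must be converted into $L^2$ bounds without losing the $\sqrt{\cdot}$ structure. Most importantly, the vectors must be grouped correctly: $\kappa_0'$ has to be kept together with the $w$-part rather than split off. Splitting it off would give the cruder floor $\kappa_0'/(1-\qtil-\Ltil)$ instead of $\hat d$.
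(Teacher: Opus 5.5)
Your proposal is correct and follows the paper's proof essentially step for step. It uses the same Pythagorean split via the martingale-difference property, the same triangle inequality in $\R^2$ giving $e_{k+1}\le\varphi(e_k)+\Ltil D_t$ (your $\varphi$ is the paper's $g$), the same choice of $\hat d$ as the fixed point of $d\mapsto\varphi(d)+\Ltil d$, and the same contraction of the excess $(\cdot-\hat d)^+$ unrolled over the $K$ inner steps. The only difference is cosmetic: in the Lipschitz split you insert $T_{\theta^\star}(w_{t,k})$ rather than $T_{\theta_t}(\theta^\star)$, which gives the same bound.
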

As announced, the theorem states that outer iterates converge geometrically, in root mean square, to a ball around the fixed point $\theta^*$. Note that the floor $\hat d$ does not depend on $K$, whereas $\rho$ decreases with $K$ towards $\Ltil/(1-\qtil)$. That is, additional inner steps accelerate each outer iteration but do not improve the steady-state accuracy. Particularizing the result for $K\!=\!1$ and $K\!\rightarrow\!\infty$ connects with classical analyses in the literature, as detailed in the next subsection. 
Finally, we note the condition $\qtil\!+\!\Ltil\!<\!1$ ensures that $\rho \!< \!1$ and $\hat d\! \geq \!0$.

\begin{proof}
Let $d_{t,k}:=(\E\|w_{t,k}-\theta^\star\|^2)^{1/2}$, so that $d_{t,0}$ and $d_{t,K}$ are the errors of $\theta_t$ and $\theta_{t+1}$, respectively, being the key values that we aim to bound. 
The proof has three steps. We first derive a one-step recursion $d_{t,k+1}\le g(d_{t,k})+\Ltil d_{t,0}$, in which $g$ is nonlinear. We then define $\hat d$ as the fixed point of the map $d \mapsto g(d)+\Ltil d$. 
Finally, we 
show that the error above the floor $(d_{t,k}-\hat d)^+$ contracts linearly, first along the inner and then across outer iterations. Throughout, Minkowski's inequality
is used in the form $(\E(X+Y)^2)^{1/2}\le(\E (X^2))^{1/2}+(\E (Y^2))^{1/2}$
for $X,Y\ge0$.

\noindent \emph{Step 1:} Writing $w_{t,k+1}-\theta^\star=\chi_k+\eta_k$ with $\chi_k:=T_{\theta_t}(w_{t,k})-T_{\theta^\star}(\theta^\star)$ and $\eta_k:=\varepsilon(\theta_t,w_{t,k},\xi_{t,k})$, and expanding the squared norm, we obtain
\begin{align}
    d_{t,k+1}^2 = \E\|\chi_k\|^2 + \E\|\eta_k\|^2 + 2\,\E[\chi_k^\top\eta_k].
    \label{eq:sq_expand}
\end{align}
Since $\chi_k$ is a deterministic function of $H_{t,k}$ and $\E[\eta_k \mid H_{t,k}] = 0$ by Ass. \ref{as:noise}, the tower rule gives 
%
    $\E\left[\chi_k^\top \eta_k\right]
    = \E\left[\chi_k^\top \E[\eta_k \mid H_{t,k}]\right] = 0.$
%
Hence \eqref{eq:sq_expand} simplifies to 
\begin{align}
    d_{t,k+1}^2 
    = \E\left[\|\chi_k\|^2\right]
    + \E\left[\|\eta_k\|^2\right].
    \label{eq:sq_clean}
\end{align}
For the first term, we write
$\chi_k = [T_{\theta_t}(w_{t,k}) - T_{\theta_t}(\theta^\star)]
        + [T_{\theta_t}(\theta^\star) - T_{\theta^\star}(\theta^\star)]$,
so the triangle inequality and Ass.~\ref{as:det} give
$\|\chi_k\| \le q\,\|w_{t,k}-\theta^\star\| + L_\theta\,\|\theta_t - \theta^\star\|$. 
Minkowski's inequality then yields
\begin{align}
    \E[\|\chi_k\|^2] 
    \leq \left(q\,d_{t,k} + L_\theta\,d_{t,0}\right)^2.
    \label{eq:det_sq_bound}
\end{align}
For the second term in \eqref{eq:sq_clean}, using the tower rule, the bound in \eqref{eq:envelope}, the inequalities $\|w_{t,k}\|\le\|w_{t,k}-\theta^\star\|+\|\theta^\star\|$ and $\|\theta_t\|\le\|\theta_t-\theta^\star\|+\|\theta^\star\|$, and finally Minkowski's inequality, gives
\begin{align}
    \E[\|\eta_k\|^2]
    &=\E\bigl[\E[\|\eta_k\|^2\mid H_{t,k}]\bigr]
    \le\E\bigl(\kappa_0+\kappa_w\|w_{t,k}\|+\kappa_\theta\|\theta_t\|\bigr)^2
    \notag\\
    &\le\E\bigl(\kappa_0'+\kappa_w\|w_{t,k}-\theta^\star\|
        +\kappa_\theta\|\theta_t-\theta^\star\|\bigr)^2\notag \\
    & \le(\kappa_0'+\kappa_w d_{t,k}+\kappa_\theta d_{t,0})^2 .
    \label{eq:noise_sq_bound}
\end{align}
Substituting \eqref{eq:det_sq_bound} and \eqref{eq:noise_sq_bound} into \eqref{eq:sq_clean} and taking square roots gives 
\begin{align}
    d_{t,k+1} 
    & \le \|[q d_{t,k}+L_\theta d_{t,0},\;
\kappa_w d_{t,k}+\kappa_\theta d_{t,0}+\kappa_0']^\top\|,
\end{align}
a Euclidean norm in $\R^2$. Writing the vector as $[q d_{t,k},\,\kappa_w d_{t,k}+\kappa_0']^\top+d_{t,0}[L_\theta,\kappa_\theta]^\top$ and applying the triangle inequality gives
\begin{align}
    d_{t,k+1}
    & \le g(d_{t,k})+\Ltil\,d_{t,0}, \label{eq:g_rec}
\end{align}
where $g(d) := \|[q d,\ \kappa_w d+\kappa_0']^\top\|=\sqrt{\qtil^2d^2+2\kappa_w\kappa_0'd+\kappa_0'^2}$.

\vspace{.1cm}
\noindent \emph{Step 2:}
The argument of the norm in $g$ is affine in $d$ with slope $[q,\kappa_w]^\top$, so it changes by $\qtil\,|d-d'|$ between $d'$ and $d$, and the reverse triangle inequality gives $|g(d)-g(d')|\le\qtil\,|d-d'|$. For $d\ge d'$ this yields $g(d)-g(d')\le\qtil\,(d-d')$; for $d<d'$ we have $g(d)-g(d')\le0$, as $g$ is non-decreasing on $[0,\infty)$ because $g^2$ is a polynomial with non-negative coefficients. Both cases combined give
\begin{align}
    g(d)-g(d')\le\qtil\,(d-d')^+ ,\qquad d,d'\ge0 .
    \label{eq:g_incr}
\end{align}
Next, observe that the error floor $\hat d$ provided in \eqref{eq:rho_dhat} is the non-negative solution of
\begin{align}
    g(\hat d)=(1-\Ltil)\,\hat d .
    \label{eq:dhat_def}
\end{align}
Indeed, squaring $g(\hat d)=(1-\Ltil)\hat d$ gives $[(1-\Ltil)^2-\qtil^2]\hat d^2-2\kappa_w\kappa_0'\hat d-\kappa_0'^2=0$. With $S:=\sqrt{(1-\Ltil)^2-q^2}$, real and larger than $\kappa_w$ because $\qtil<1-\Ltil$, the leading coefficient is $S^2-\kappa_w^2>0$, the discriminant is $4\kappa_0'^2S^2$, and the roots are $\kappa_0'/(S-\kappa_w)\ge0$ and $-\kappa_0'/(S+\kappa_w)\le0$. The non-negative one is the value in \eqref{eq:rho_dhat}.

\vspace{.1cm}
\noindent \emph{Step 3:}
Let $e_{t,k}:=(d_{t,k}-\hat d)^+$. Subtracting $\hat d=g(\hat d)+\Ltil\hat d$ from \eqref{eq:g_rec} and using \eqref{eq:g_incr} with $(d,d')=(d_{t,k},\hat d)$ yields
\[
d_{t,k+1}-\hat d
\le g(d_{t,k})-g(\hat d)+\Ltil\,(d_{t,0}-\hat d)
\le \qtil\,e_{t,k}+\Ltil\,e_{t,0}.
\]
The right-hand side is non-negative, so it also bounds
$e_{t,k+1}=\max\{d_{t,k+1}-\hat d,0\}$. Unrolling over $k=0,\dots,K-1$, gives
\begin{align}
    e_{t,K}\le\Bigl[\qtil^K+\Ltil\,\frac{1-\qtil^K}{1-\qtil}\Bigr]e_{t,0}
    =\rho\,e_{t,0}.
    \label{eq:e_rec}
\end{align}
Since $\theta_{t+1}=w_{t,K}$ means $e_{t+1,0}=e_{t,K}$, iterating \eqref{eq:e_rec} over
$t$ gives $e_{t,0}\le\rho^{\,t}e_{0,0}$. Finally $d_{t,0}-\hat d\le e_{t,0}$, so
$d_{t,0}\le\hat d+\rho^{\,t}e_{0,0}$, which is \eqref{eq:envfree_bound_iid}.
\end{proof}


\begin{figure*}[!t]
    \centering
    \subfloat
    {
    \includegraphics[width=0.32\textwidth]{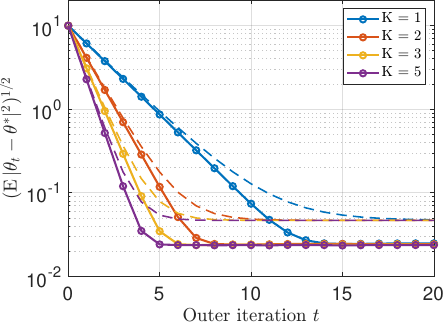}
    \label{subfig:1}
    }%
    \subfloat
    {
    \includegraphics[width=0.32\textwidth]{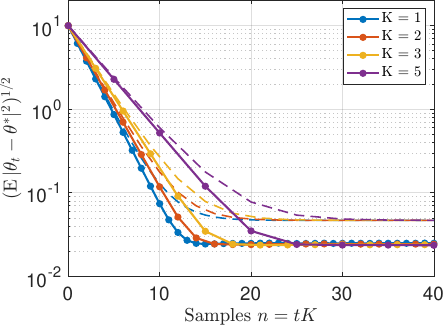}
    \label{subfig:2}
    }%
    \subfloat
    {
    \includegraphics[width=0.32\textwidth]{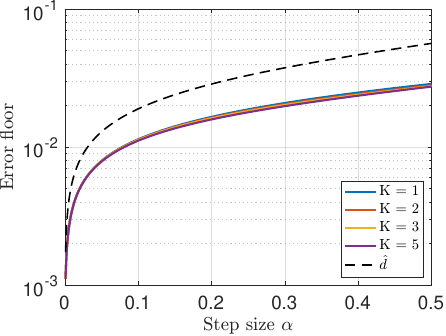}
    \label{subfig:3}
    }%
\caption{
Frozen-target \ac{td}$(0)$ on the two-state \ac{mdp}, for $K\in\{1,2,3,5\}$.
\textbf{Left} and \textbf{middle}: error $(\E\|\theta_t-\theta^\star\|^2)^{1/2}$ vs.\
outer iteration $t$ and vs.\ number of samples $n=tK$, averaged over $2\cdot10^4$ Monte
Carlo runs (solid), together with the bound of Thm.~\ref{thm:envfree_bound_iid}
(dashed).
\textbf{Right}: steady-state error vs.\ step size $\alpha$, computed exactly (solid) and
predicted by the floor $\hat d$ (dashed), which does not depend on $K$.
}
\label{fig:res}
\end{figure*}

\subsection{Discussion}
\label{sec::related}

Thm.~\ref{thm:envfree_bound_iid} establishes convergence guarantees for stochastic \ac{td} learning under i.i.d.\ sampling, arbitrary target-update periods, and general parametrizations satisfying our assumptions. The operator framework also provides explicit connections between existing convergence results.
First, the result connects \ac{td} learning with stochastic gradient descent analysis.
When $\Ltil=0$, we have $L_\theta=\kappa_\theta=0$, so $T_\theta(w)=T(w)$ and the noise envelope reduces to $\kappa_0+\kappa_w|w|$, removing the dependence on the target. The error floor then becomes $\hat d=\kappa_0'/(\sqrt{1-q^2}-\kappa_w)$, recovering the floor in~\cite[Eq.~(8)]{larsson2025unified}. The dependence on the target introduces the additional term $\Ltil(1-\qtil^K)/(1-\qtil)$ in $\rho$ and replaces $1$ by $(1-\Ltil)^2$ in the denominator of $\hat d$ in~\cite{larsson2025unified}.
Furthermore, in the deterministic limit $\kappa_0=\kappa_w=\kappa_\theta=0$, we obtain $\hat d=0$, $\qtil=q$, and $\Ltil=L_\theta$. Hence, the bound in \eqref{eq:envfree_bound_iid} reduces to $\|\theta_t-\theta^\star\|\le L_G(K)^t\|\theta_0-\theta^\star\|$. This recovers a contraction guarantee of the same nature as~\cite{asadi2023td}, which analyzes a deterministic version of frozen-target \ac{td} through iterative optimization beyond linear approximation.
We extend the analysis to i.i.d.\ stochastic updates using the operator-theoretic perspective.

For \ac{td} with linear approximation, $K=1$ gives $T_\theta(\theta)=\theta+\alpha[b-(A-\gamma C)\theta]$, the deterministic update underlying the classical finite-time analysis of~\cite{bhandari2018finite}. That work treats both i.i.d.\ and Markovian observations. We restrict sampling to i.i.d.\ observations but allow arbitrary $K$ and nonlinear approximators satisfying our assumptions.
For nonlinear \ac{td} with $K > 1$ under i.i.d.\ sampling, \cite[Thm.~3]{fellows2023target} bounds $\mathbb{E}[\|\theta_t-\theta^\star\|]$ using Jacobian-based arguments and uniformly bounded variance, typically requiring regularization. Our analysis controls the (stronger) root-mean-square error through $\rho$ and $\hat d$, allowing noise variance to grow with the iterates.

Under linear approximation, \cite{wu2025unifying} interprets \ac{td}, fitted Q-iteration (FQI), and partial FQI as differently preconditioned iterations of a linear system. In our notation, $G(\theta)=\theta+P_K[b-(A-\gamma C)\theta]$, where $P_K=\alpha\sum_{j=0}^{K-1}(I-\alpha A)^j$. Thus, $K=1$ gives \ac{td} with $P_1=\alpha I$, while $K\to\infty$ gives FQI with $P_K\to A^{-1}$ when $\|I-\alpha A\|<1$, corresponding to exact frozen-target regression. Their spectral analysis characterizes stability through $I-P_K(A-\gamma C)$, whereas our analysis bounds its norm by $L_G(K)$. Our operator formulation additionally accommodates nonlinear schemes.
Finally, $1-\rho=(1-\qtil^K)(1-\qtil-\Ltil)/(1-\qtil)$, so $\rho<1$ is equivalent to $\qtil+\Ltil<1$ for every $K\ge1$. Under this condition, $\rho$ is nonincreasing in $K$ and approaches $\Ltil/(1-\qtil)$ in the limit, while $\hat d$ is independent of $K$. Larger $K$ improves the contraction rate of the bound per target update, but need not improve sample efficiency (each update requires $K$ samples). This is further demonstrated in the next section. These properties concern our sufficient condition and error bound, not the actual stability region or steady-state error. In particular, they do not exclude stabilization through larger $K$, as established for deterministic linear approximation~\cite{lee2026periodic}.

\section{Simulations}\label{sec:sims}
We illustrate the framework on \ac{td} learning with linear function approximation on a two-state \ac{mdp}. The features are $\phi=(1,1.2)$, the rewards $r=(0,0.1)$, the discount factor $\gamma=0.2$, the stepsize $\alpha=0.4$ and the transition kernel is $P=\bigl(\begin{smallmatrix}0.95&0.05\\0.05&0.95\end{smallmatrix}\bigr)$, with stationary distribution $\pi=(\tfrac12,\tfrac12)$. Each inner step draws $s\sim\pi$ and $s'\sim P(s,\cdot)$ independently of the past, so Ass. \ref{as:noise} holds by construction. For these values, the hypothesis of Thm.~\ref{thm:envfree_bound_iid} is satisfied and the iterates converge to a ball around $\theta^\star$. We initialize at $\theta_0 = 10$ and average over $2\cdot 10^4$ Monte-Carlo runs.

Fig.~\ref{fig:res} (left) shows the root mean-square error against the outer iteration index $t$ for $K\in\{1,2,3,5\}$, together with the bound \eqref{eq:envfree_bound_iid}. The bound reproduces the observed decay rate for every $K$ and is conservative in the steady-state level, which is due to the worst-case alignment implicit in the envelope \eqref{eq:envelope}. The predicted contraction factors decrease with $K$ towards $\Ltil/(1-\qtil)$. Additional inner steps accelerate each outer iteration, but with rapidly diminishing returns.

Fig.~\ref{fig:res} (middle) plots the same trajectories against the number of samples consumed, $n=tK$. The order of the curves is now reversed, with larger $K$ slightly worse. The inner loop buys speed per outer iteration, not sample efficiency. The bound shows the same behavior.

Fig.~\ref{fig:res} (right) shows the exact steady-state error as a function of the step size $\alpha$, for each $K$, together with $\hat d$. The floor $\hat d$ correctly captures the dependence on $\alpha$, and not merely its value at a single operating point. For linear \ac{td}$(0)$, $1-\qtil-\Ltil$, $\kappa_0'$ and $\kappa_w$ are all $\Theta(\alpha)$, hence $\hat d=\Theta(\sqrt\alpha)$, matching the variance scaling of constant-step stochastic approximation.


\section{Conclusion}
We proposed a contraction framework for stochastic iterations with a frozen target and
used it to bound the root-mean-square error of the outer iterates under i.i.d.\
sampling, for any target-update period $K$ and without assuming gradient structure in
the inner map. Simulations of a simple linear \ac{td}$(0)$ problem show that the derived contraction
factor tracks the observed decay and that both improve with $K$ per outer iteration. Using larger $K$, however, is not necessarily sample efficient. Furthermore, the error floor scales as $\Theta(\sqrt{\alpha})$, matching the variance of constant-step stochastic approximation.


\newpage
\balance

\section{Acknowledgments}
This work was supported by the Spanish AEI \newline (AEI/10.13039/501100011033) grants PID2022-136887NB-I00 and PID2025-170000NB-I00, and the Community of Madrid via IDEA-CM (TEC-2024/COM-89), and the Ellis Madrid Unit. Beyond this support, the authors have no relevant financial or non-financial interests to disclose. Claude AI was used to assist in writing the manuscript and coding the simulations. The authors take full responsibility for the results of this paper.

\section{Compliance with Ethical Standards}
This is a numerical simulation study for which no ethical approval was required.

\bibliographystyle{IEEEbib}
\bibliography{references}

\end{document}